\theoremstyle{definition}
\newtheorem{definition}{Definition}[section]
\newtheorem{theorem}{Theorem}[section]
\newtheorem{lemma}[theorem]{Lemma}
\newtheorem*{remark}{Remark}
\newtheorem{proposition}{Proposition}[section]
\icmltitlerunning{A Geometric Insight into Equivariant Message Passing Neural Networks on Riemannian Manifolds}
\begin{document}

\twocolumn[
\icmltitle{A Geometric Insight into Equivariant Message Passing Neural Networks on Riemannian Manifolds}

% It is OKAY to include author information, even for blind
% submissions: the style file will automatically remove it for you
% unless you've provided the [accepted] option to the icml2023
% package.

% List of affiliations: The first argument should be a (short)
% identifier you will use later to specify author affiliations
% Academic affiliations should list Department, University, City, Region, Country
% Industry affiliations should list Company, City, Region, Country

% You can specify symbols, otherwise they are numbered in order.
% Ideally, you should not use this facility. Affiliations will be numbered
% in order of appearance and this is the preferred way.

\begin{icmlauthorlist}
\icmlauthor{Ilyes Batatia}{yyy,xxx}
%\icmlauthor{}{sch}
%\icmlauthor{}{sch}
\end{icmlauthorlist}

\icmlaffiliation{yyy}{ENS Paris-Saclay, Université Paris-Saclay, 91190 Gif-sur-Yvette, France}
\icmlaffiliation{xxx}{University of Cambridge, Cambridge, CB2 1PZ UK}

\icmlcorrespondingauthor{Ilyes Batatia}{ilyes.batatia@ens-paris-saclay.fr}

% You may provide any keywords that you
% find helpful for describing your paper; these are used to populate
% the "keywords" metadata in the PDF but will not be shown in the document
\icmlkeywords{Machine Learning,}

\vskip 0.3in
]

% this must go after the closing bracket ] following \twocolumn[ ...

% This command actually creates the footnote in the first column
% listing the affiliations and the copyright notice.
% The command takes one argument, which is text to display at the start of the footnote.
% The \icmlEqualContribution command is standard text for equal contribution.
% Remove it (just {}) if you do not need this facility.

\printAffiliationsAndNotice{}  % leave blank if no need to mention equal contribution
%\printAffiliationsAndNotice{\icmlEqualContribution} % otherwise use the standard text.
\begin{abstract}
This work proposes a geometric insight into equivariant message passing on Riemannian manifolds. As previously proposed, numerical features on Riemannian manifolds are represented as coordinate-independent feature fields on the manifold. To any coordinate-independent feature field on a manifold comes attached an equivariant embedding of the principal bundle to the space of numerical features. We argue that the metric this embedding induces on the numerical feature space should optimally preserve the principal bundle's original metric. This optimality criterion leads to the minimization of a twisted form of the Polyakov action with respect to the graph of this embedding, yielding an equivariant diffusion process on the associated vector bundle. We obtain a message passing scheme on the manifold by discretizing the diffusion equation flow for a fixed time step. We propose a higher-order equivariant diffusion process equivalent to diffusion on the cartesian product of the base manifold. The discretization of the higher-order diffusion process on a graph yields a new general class of equivariant GNN, generalizing the ACE and MACE formalism to data on Riemannian manifolds.
\end{abstract}

\section{Introduction}

In the last decade, deep learning has emerged as the predominant paradigm for a wide range of applications in machine learning. Exploiting the underlying structure of the data is at the core of the success of deep learning. Convolutional Neural Networks (CNNs) \cite{CNNlecun} exploit the spatial structure of images via the locality of the filters and the weight sharing that enhances its generalizability. 
The success of CNNs in computer vision has made clear the benefits of explicitly using translation equivariant networks. However, many relevant problems exhibit more complex symmetries than images. Such problems benefit from using latent representations related
to the specific underlying symmetry group theory. Equivariant neural networks have emerged as the general class of models that internally transform according to a given symmetry group including permutation invariance for graphs ~\cite{messagepassing, thiede2021autobahn}; spatial rotations for 2D~\cite{CohenSteerable2016, estevespolar2017} and 3D~\cite{Bartok2013OnEnvironments, Anderson2019CormorantCM, ACE_equivariant_ralf, nequip, Batatia2022mace, toshev2023e3} data; or Lorentz boost \cite{Bogatskiy:2022czk, Li:2022xfc, Munoz:2022gjq} or more generally  Lie groups \cite{batatia2023general}. The concept of spaces with symmetries is inherent to all physical phenomena, for instance, the SO(3) group in molecular interaction and the SO(3,1) group in particle physics. Beyond the symmetries, it is crucial to generalize equivariant neural networks to processes on more general domains than the Euclidean space, as physics exhibits many spatial domains, including the Minkowski space in special relativity. Previous study \cite{WeilerManifold2021, cohen2019gauge} generalized CNNs to Riemannian manifolds by defining a notion of coordinate-independent feature fields.

Machine learning on graphs has shown to be successful in a broad range of applications, including physical science \cite{ml_physiscs, bronstein2021geometric}. Large amounts of data are available as graphs; therefore, developing efficient algorithms over graphs is crucial. Graph Neural Networks (GNNs) emerged as one of the most promising methods to apply Deep Learning principles to data over graphs. The relationship between GNNs and diffusion processes has been studied in the attentional flavour \cite{beltrami}. The more general message-passing scheme can be understood as a partial differential equation \cite{messagepassing}. 

This study aims to extend the concept of equivariant message passing to data on Riemannian Manifolds. Our approach demonstrates how, without non-linear activation, one can derive equivariant message passing from the minimization of a specific functional: the twisted Polyakov action. The specialization to Euclidean spaces recovers well known equivariant message passing architectures. This functional can be seen as measuring how accurate the feature field is at embedding the geometry of the manifold along with the action of a given group on it into a vector space.
The update rule of the architecture comes from the discretization of the diffusion process resulting from the minimization of a twisted form of this functional. We propose a higher-order equivariant diffusion process equivalent to diffusion on the cartesian product of the base manifold. The discretization of the higher-order diffusion process on a graph yields a new general class of equivariant GNN closely related to the MACE (multi- Atomic Cluster Expansion) formalism. While the ideas developed in this paper could interest the community, they are far from mature and complete. This paper is intended to spark interest and hopefully lead to a complete understanding of the interplay between modern geometric machine learning architectures and the associated functional they minimize.

In the first chapter, we present Message Passing Neural Networks (MPNNs) formalism and make the equivariance constraint explicit. We explain the connection between MPNNs and diffusion through the discretization of Beltrami flow. In the second chapter, we construct an equivariant feature diffusion to apply the analogy between MPNNs and diffusion processes to equivariant MPNNs. We first represent coordinate independent feature fields on Riemannian manifolds as sections of associated vector bundles. To any coordinate-independent feature field comes a canonically attached equivariant embedding of the principal bundle to the space of numerical features. Minimizing the Polyakov action with respect to the graph of this embedding yields a flow on the associated bundle that is equivalent to a diffusion process. By discretizing this flow for a fixed time step, one obtains a message passing on the manifold. To realize a higher-order message passing on the manifold, we considered Equivariant feature diffusion on the Cartesian product of the manifold allows us to address this question. Finally, we obtain a discrete version of the message passing on the graph by embedding a graph into a Riemannian manifold. We show that such formulation is very closely related to the MACE architecture proposed for the representation of molecules in GNNs in the case of manifolds being $\mathbb{R}^{3}$ and the group $SE(3)$.

\addtocontents{toc}{\vspace{0.5cm}}

\section{Background}

\subsection{Equivariant Message Passing Neural Networks}
Message Passing Neural Networks (MPNN) \cite{messagepassing} are a general class of Graph Neural Networks that parametrize a mapping from labelled graphs to a vector space. In MPNN frameworks, each node is labelled with a state updated via successive message passing between neighbours. After a fixed number of iterations, a readout function maps the state to the space of real numbers. Equivariant MPNNs emerge if the states of the nodes have additional vector features along with an action of a group on them. In this case, one seeks mappings that preserve a given group's action.

\paragraph{Nodes states} Let $\Gamma = (\mathcal{V} = \{1,...,n\},\mathcal{E})$ be a graph with $\mathcal{V}$ and $\mathcal{E}$ denoting the nodes and edges respectively. The state of node $i$, $\sigma_{i}^{(t)}$ is composed of two properties :
\begin{equation}
\label{eq:node_state}
\sigma_{i}^{(t)} = (r_{i},h_{i}^{(t)})
\end{equation}
with $r_{i}$ the positional attribute of the node and $h_{i}^{(t)}$ is a learnable feature of node $i$. These learnable features are 
updated after each iteration of message passing, with an iteration index by $t$.

\paragraph{Message passing and update}

During each round of message passing, features on each node $h_{i}^{(t)}$ are updated based on aggregated messages, $m_{i}^{(t)}$ derived from the states of the atoms in the neighbourhood of $i$, denoted by $\mathcal{N}(i)$ :
\begin{equation}
\label{eq:massage_def}    m^{(t)}_{i} = \bigoplus_{j \in \mathcal{N}(i)} M_{t}(\sigma_{j}^{(t)},\sigma_{i}^{(t)})
\end{equation}
where $\bigoplus_{j \in \mathcal{N}(i)}$ is any permutation invariant operation over the neighbours of node i and $M_{t}$ is a learnable function acting on states of nodes $i$ and $j$. The messages are used to update the features of node $i$ with a learnable update function $U_{t}$:
\begin{equation}
\label{eq:update}
\sigma_{i}^{(t+1)} = (r_{i},h_{i}^{(t+1)}) = (r_{i},U_{t}(m_{i}^{(t)}))
\end{equation}

\paragraph{Readout} After $T$ iterations of message passing and update, in the readout phase, the states of the nodes are mapped to the output $y_{i}$ by a learnable function $R$:
\begin{equation}
y_{i} = R(\sigma_{i}^{(T)})   
\end{equation}

\paragraph{Equivariance}
One can ask for an MPNN to be equivariant with respect to the action of a group $G$. We will consider the case where $G$ is a reductive Lie group, and V is a representation of $G$ with action $\rho$. Formally, a message as a generic function of the states $(\sigma_{i_{1}},...,\sigma_{i_{n}})$ is said to be $(\rho, G)$ equivariant if it respects the constraint:
\begin{equation}
\label{eq:equi-constraint}
m^{(t)}(g \cdot (\sigma_{i_{1}}^{(t)},...,\sigma_{i_{n}}^{(t)})) = \rho(g) \cdot m^{(t)}(\sigma_{i_{1}}^{(t)},...,\sigma_{i_{n}}^{(t)}) \quad \forall g \in G
\end{equation}
where $g \cdot (\sigma_{i_{1}}^{(t)},...,\sigma_{i_{n}}^{(t)})$ denotes an action of $G$ over the states such that 
\begin{equation}
\label{state_action}
g \cdot \sigma_{i}^{(t)} = (g \cdot r_{i},\rho_{h}(g)h_{i}^{(t)} )
\end{equation}
with $\rho_{h}(g)$ denoting the action of g on $h$. 
In practice from equivariance constraint of equation \eqref{eq:equi-constraint} results constraints on the type of operations $M_{t},\bigoplus,U_{t}
,R_{t}$. 

\subsection{Diffusion process and message passing}

Recently a graph Beltrami flow has been introduced  \cite{beltrami} by analogy with diffusion processes on images and related it to a subcase of MPNNs referred to as Attentional graph neural networks. The graph Beltrami flow evolves the features as :
\begin{equation}
\label{beltrami_graph}
\frac{\partial h^{(t)}_{i}}{\partial t} = \sum_{j : i\to j \in \mathcal{E}} a(\sigma_{j}^{(t)},\sigma_{i}^{(t)})(h_{j}^{(t)} - h_{i}^{(t)})
\end{equation}
The solution of this equation for long times minimizes a discretized version of the Polyakov action, which is equivalent to finding an optimal embedding.
With a discretization with a time step of 1, one obtains:
\begin{equation}
h^{(t+1)}_{i} = h^{(t)}_{i} + \sum_{j : i\to j \in \mathcal{E}} a(\sigma_{j}^{(t)},\sigma_{i}^{(t)})(h_{j}^{(t)} - h_{i}^{(t)})
\end{equation}
This update formula corresponds precisely to the attention flavour of MPNNs, and the attention weights $a(\sigma_{j}^{(t)},\sigma_{i}^{(t)})$ can be understood as anisotropic diffusion coefficients.
More general flavours are possible by studying the non-linear equation of evolution of the type $\frac{\partial h^{(t)}_{i}}{\partial t} = \Phi(\{\sigma_{j}\}_{j \in \mathcal{N}(i)})$. In the next section, we will show that equivariant message passing can be achieved as an equivariant diffusion process resulting from a twisted form of the Polyakov action. Other analogies of GNNs with energy minimization have been proposed in \cite{digiovanni2022graph}.

\addtocontents{toc}{\vspace{0.5cm}}

\section{A geometric insight into equivariant message passing}

To apply the rich analogy between diffusion processes and equivariant message passing, one must represent message passing as a diffusion that respects the underlying symmetry imposed by the structure group. For this purpose, we will define an equivariant features diffusion process on a Riemannian Manifold. This will rely on defining equivariant features as sections of an associated bundle as proposed in \cite{cohen2019gauge, WeilerManifold2021}. A general Laplacian will carry the diffusion process that updates the features on the associated bundle. This diffusion process results from minimizing a twisted form of the Polyakov action that preserves equivariance. The equivariant features diffusion process is thus related to finding an optimal mapping between the principal bundle of a manifold and a given irreducible representation of the structure group.

\subsection{Equivarient features fields as sections of associated bundles}

We aim to construct a geometric notion of equivariant features on the manifold as outlined in \cite{WeilerManifold2021, cohen2019gauge}. We will review some fundamentals of differential geometry for this construction. For more mathematical details, see~\cite{zbMATH03194988}. As manifolds do not come with a preferential reference frame (gauge), features must be defined up to an arbitrary choice of coordinates. It is natural to ask for the network to be independent of the coordination as proposed in \cite{WeilerManifold2021}. It results naturally in asking the network to be equivariant under gauge transformations, i.e. local change of reference frame. 
Let ($\mathcal{M},\eta_{\mathcal{M}}$) be a Riemannian manifold of dimension $\mathfrak{m}$.
We will start by making more specific the notion of coordinate independence. To each point $x \in \mathcal{M}$, one can attach a tangent space $T_{x}\mathcal{M}$ which locally looks like $\mathbb{R}^{\mathfrak{m}}$. Each tangent space $T_{x}\mathcal{M}$ is isomorphic to $\mathbb{R}^{\mathfrak{m}}$; however, there is no canonical isomorphism. A preferred choice of local isomorphism is called a gauge \cite{Naber:1997yu}.
\begin{definition}[Gauge]
Let $x \in \mathcal{M}$ and $U^{A}$ be a neighborhood of $x$. A gauge is defined as a smooth and invertible map:
\begin{equation}
    \psi^{A}_{x} : T_{x}\mathcal{M} \to  \mathbb{R}^{\mathfrak{m}} 
\end{equation}
that specifies a preferred choice of isomorphism between $T_{x}\mathcal{M}$ and $\mathbb{R}^{\mathfrak{m}}$.
\end{definition}
Gauges coordinatize tangent spaces only in local neighbourhoods, and in all generality, they can not be extended to the full manifold while preserving their smoothness requirement. One can consider a collection of local neighbourhoods and their corresponding gauges called an atlas   :
\begin{equation}
    \mathcal{A} = \{ (U^{X}, \psi^{X})\}_{X \in \mathcal{X}}
\end{equation}
such that $\bigcup_{X \in \mathcal{X}} U^{X} = \mathcal{M}$. On intersections $U^{A} \cap U^{B} \neq \varnothing $,  the different gauges $\psi^{A}_{x},\psi^{B}_{x}$ are stitched together using smooth transition functions defined on the structure group $G$. From now on, we will consider $G$ to be a reductive Lie group.

\begin{definition}[Transition functions]
A transition function (see~\ref{fig:gauge-maps}) between intersections $U^{A} \cap U^{B} \neq \varnothing $,  is a map :
\begin{equation}
g^{BA} = U^{A} \cap U^{B} \to G , x \mapsto \psi^{B}_{x} \circ (\psi^{A}_{x})^{-1}
\end{equation}
\end{definition}
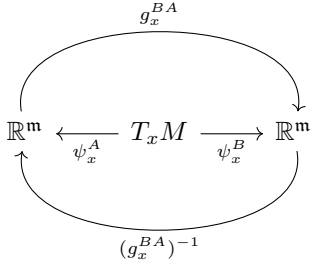
\begin{figure}[h!]
\label{fig:gauge-maps}
\centering
\begin{tikzcd}
\mathbb{R}^{\mathfrak{m}} \arrow[rr,bend left=100,"g^{BA}_{x}"] & \arrow[l,"\psi^{A}_{x} "] T_{x}M \arrow[r,"\psi^{B}_{x}"'] & \mathbb{R}^{\mathfrak{m}} \arrow[ll,bend left=100,"(g^{BA}_{x})^{-1}"]
\end{tikzcd}
\caption{Relationship between gauge maps and transition functions.}
\end{figure}
Therefore one observes that a single coordinate free tangent vector $v \in T_{x}M$ is represented by two vectors in $v^{A},v^{B} \in  \mathbb{R}^{\mathfrak{m}}$ :
\begin{equation}
    v^{A} = g^{AB}v^{B}, g^{BA}\in G
\end{equation}
depending on the gauges $\psi^{A},\psi^{B}$. We see that coordinate independence is closely related to the action of the structure group on the manifold that defines the transition maps between gauges. In practical implementations, we aim for equivariant message passing to diffuse feature fields relative to some local reference frame that take values in some vector space $V$ of dimension $c$. Let $\psi^{A}$ be a local gauge on a neighborhood $U^{A}$ of $M$. Relative to this gauge, a local feature field is defined as :
\begin{equation}
    f^{A} : U^{A} \to V
\end{equation}
given by a vector with $c$-dimensional (corresponding to $c$ channels) feature vector $f^{A}(x)$ at each point $x\in M$. According to some other gauge $\psi^{B}$ on $U^{B}$, one can measure $f^{B}$. As we require the global feature field $h$ to be coordinate independent, the different local feature fields must transform principally. Because gauge transformations are elements of $G$, the local features fields will transform according to some linear representations of $G$ on $V$  :
\begin{equation}
    \rho : G \to Aut(V) = GL(c)
\end{equation}
and we have the local features fields transforming as : 
\begin{equation}
    f^{B}(x) = \rho(g^{BA}_{x})f^{A}(x), x \in U^{A} \cap U^{B}
\end{equation}
The type of representations models different types of features. The scalar fields transform according to the trivial representation $\rho(g) = \mathbf{1}, \forall g \in G$ whose local numerical values are invariant under gauge transformations. More generally, feature fields that transform according to irreducible representations of $G$ are of great practical importance in physics. As $G$ is a reductive Lie group, any finite representation can be decomposed as a sum of irreducible representations. We will therefore restrict ourselves to the case where $V$ is a finite-dimensional representation.

To properly define the diffusion of equivariant feature fields, we must give a geometric interpretation of coordinate-independent feature fields. Global coordinate independent feature fields are sections of an associated vector bundle. Let $G$ be a reductive Lie group and $V$ a finite-dimensional representation of $G$ of dimension c and denote by $\rho$ the linear representation of $G$ on $V$. 
Fiber bundles allow for a global description of fields over a manifold and can thus represent feature fields. 

\begin{definition}[Fiber bundle]
A fiber bundle is a quadruplet $(E,\pi, M, F)$ with E the total space, $M$ the base space and $F$ the typical fiber and a smooth surjective map $\pi : E \to M$. A fiber bundle is locally trivial as $\forall x \in M$ there exists a neighbourhood $U$ of $x$ and a diffeomorphism $\phi : U \times F \to \pi^{-1}(U)$, such that $\pi \circ \phi(x,f) = x, \forall f \in F$. The tuple $(\psi, U)$ is called a local trivialization.
\end{definition}
To include more specific mathematical structures on the typical fiber, one can define subtypes of bundles. An important example is vector bundles, where F is a vector space, and this is crucial as it formalizes the concept of parameterizing a vector space by a manifold.
\begin{definition}[Vector bundle]
A vector bundle is a triplet $(E,\pi,M)$ with $E,M$ two manifolds, and $\pi : E \to M$ such that the preimage $\pi^{-1}(x) \text{ of } x \in M$ has the structure of a vector space.
\end{definition}

To combine geometric properties with differential geometry, one needs to construct a principal bundle $P$, that locally looks like the product of $M$ with a structure group $G$ and such that transition maps are isomorphism \cite{Warner}. 

\begin{definition}[Principal bundle]
A principal bundle is a quadruplet $(P,\pi,M,G)$ where $P$ and $M$ are two manifolds, $G$ a Lie group, $\pi : P \to M$ is a surjective map such that $\pi^{-1}(x)$ is  diffeomorphic to $G$ and there is an action $\cdot$ of $G$ on $P$ such that : 
\begin{itemize}
    \item $\pi(p \cdot g) = \pi(p)$ for $p \in \pi^{-1}(x)$ and $g \in G$
    \item the restriction $G\times \pi^{-1}(x) \to \pi^{-1}(x)$ is free and transitive.
\end{itemize}
We call $M$ the base manifold, $P$ the total space
and $G$ the structure group of the principal bundle. When no confusion can be made, $(P,\pi,M,G)$ is called $P$.
\end{definition}

\noindent The disjoint union of bases of $T_{x}M,x\in M$ that are equivalent by the action of $G$ forms the total space of a principal bundle $PG(TM)$ over $M$ called the bundle of G-frames of $TM$.For now we will refer by $P$ to $PG(TM)$. 
%More information on trivialization of $P$ can be found in the annex.

In the following, we will construct associated feature vector bundles with feature coefficients in $V$ as typical fibers. Under gauge transformations, these fibers are acted on by the group linear representation $\rho : G \to Aut(V)$. These features vector bundles are constructed as quotients.

\begin{definition}[Associated bundle]
Let $(P,\pi,M,G)$ be a principal bundle and $\rho$ a linear representation of $G$ on $V$. We define $E = (P \times V) / G$, a point of $E$ is of the form

\begin{equation}
\label{Associated_bundle_point}
    [p,h] = \{(p\cdot g, \rho(g^{-1})h),g \in G\}
\end{equation}

where $p\in P$ and $h \in V$. Let $\pi_{E}: E$ be given by $\pi_{E}[p,h] = \pi(p)$. Then $(E,\pi_{E},M)$ forms a vector bundle called an associated vector bundle to P, denoted $P\times_{(\rho,G)}V$.
\end{definition}

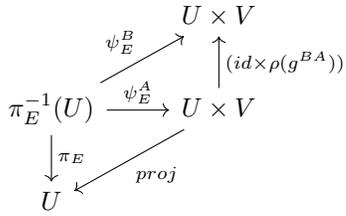
\begin{figure}[h!]
\begin{minipage}[b]{\linewidth}
   \centering
    \begin{tikzcd}
     & U \times V & \\
    \pi^{-1}_{E}(U) \arrow[r,"\psi_{E}^{A}"] \arrow[ur,"\psi_{E}^{B}"] \arrow[d,"\pi_{E}"] & U\times V \arrow[u,"(id \times \rho(g^{BA}))"'] \arrow[dl,"proj"]\\
    U 
    \end{tikzcd} 
\caption{Trivialization of $E \xrightarrow[]{\pi_{E}} M$}
\end{minipage}
\end{figure}

The construction of $E$ is equivalent to coordinate independent feature vectors on M : features $f(x) \in E$ are expressed relative to arbitrary frames in $P$.

\begin{definition}[Coordinate free features field]
Coordinate free features fields are defined as smooth global sections $f \in \Gamma(E)$ that is a smooth map $f: M \to E$ such that $\pi_{E} \circ f = id_{M}$. 
\end{definition}

\begin{lemma}
The coordinate free feature fields $f \in \Gamma(E)$ and ($\rho,G$)-equivariant functions $h$ on $P$ are in one-to-one correspondence such that one can attach to any $h$ a canonical coordinate free feature field $f_{h}$.
\end{lemma}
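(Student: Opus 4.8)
The plan is to construct the correspondence explicitly in both directions and check that the two constructions are mutually inverse, the only substantive point being smoothness. Given a $(\rho,G)$-equivariant function $h : P \to V$, i.e.\ one satisfying $h(p\cdot g) = \rho(g^{-1})h(p)$, I would define $f_h : M \to E$ by $f_h(x) = [p, h(p)]$, where $p$ is any element of the fiber $\pi^{-1}(x)$. The first thing to verify is independence of the choice of $p$: any other point of the fiber is $p\cdot g$ for a unique $g\in G$ (by freeness and transitivity of the $G$-action), and $[p\cdot g, h(p\cdot g)] = [p\cdot g, \rho(g^{-1})h(p)] = [p,h(p)]$ directly from the equivalence relation of equation \eqref{Associated_bundle_point} defining $E$. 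Then $\pi_E(f_h(x)) = \pi(p) = x$, so $f_h$ is at least a set-theoretic section.

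For the converse, given $f\in\Gamma(E)$, I would use the fact that for each fixed $p\in P$ the map $\iota_p : V \to \pi_E^{-1}(\pi(p))$, $v\mapsto [p,v]$, is a bijection (injectivity again uses freeness of the $G$-action on $P$, surjectivity is immediate from the definition of the quotient). Hence there is a unique vector $h_f(p)\in V$ with $f(\pi(p)) = [p, h_f(p)]$, and this defines $h_f : P\to V$. Equivariance is then forced: comparing $f(x) = [p, h_f(p)] = [p\cdot g, \rho(g^{-1})h_f(p)]$ with $f(x) = [p\cdot g, h_f(p\cdot g)]$ and invoking uniqueness gives $h_f(p\cdot g) = \rho(g^{-1})h_f(p)$. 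A short check shows $h\mapsto f_h$ and $f\mapsto h_f$ are inverse: $h_{f_h}(p)$ is the unique $v$ with $[p,v] = f_h(\pi(p)) = [p, h(p)]$, hence $v = h(p)$, and the other composition is symmetric.

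The one genuinely non-formal point is smoothness, so I would devote the bulk of the argument to it. Working in a local trivialization $\psi^A_E$ of $E$ over $U^A$ together with the gauge $\psi^A$, which provides a local section $s^A : U^A \to P$ of the principal bundle, one has $f|_{U^A}(x) = [s^A(x), f^A(x)]$ for a smooth $f^A : U^A\to V$, and then $h_f(s^A(x)\cdot g) = \rho(g^{-1})f^A(x)$; since every point of $\pi^{-1}(U^A)$ is uniquely of the form $s^A(x)\cdot g$ with $(x,g)$ depending smoothly on the point, $h_f$ is smooth on $\pi^{-1}(U^A)$, and smoothness being local, $h_f$ is smooth on $P$. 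The reverse direction is analogous: $f_h$ read off in the trivialization is $x\mapsto (x, h(s^A(x)))$, smooth because $h$ and $s^A$ are. I do not expect a real obstacle here beyond bookkeeping: one must only note that the local sections $s^A$ exist (they are exactly the gauges from the earlier construction) and that the transition functions glue the local descriptions consistently, which is precisely the content of the trivialization diagram for $E\xrightarrow{\pi_E}M$.
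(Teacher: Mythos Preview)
Your proof is correct and follows the same approach as the paper: construct $f_h(x)=[p,h(p)]$, verify well-definedness via the equivariance of $h$ and the equivalence relation defining $E$, and invert by reading off the unique $v$ with $f(\pi(p))=[p,v]$. The paper's proof is briefer and omits the smoothness verification and the explicit check that the two constructions are mutually inverse; your treatment of smoothness via local sections $s^A$ is a genuine addition rather than a different route.
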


\begin{proof}
Define a ($\rho,G$)-equivariant function $h : P \to V$ from the principal bundle to the representation $V$. Let $p_{1},p_{2} \in \pi^{-1}(x)$ and define $f_{h}^{1}(x) = [p_1,h(p_1)]$ and $f_{h}^{2}(x) = [p_2,h(p_2)]$. As  $p_{1},p_{2} \in \pi^{-1}(x)$, there exists a $g\in G$ such that $p_{1} = p_{2} \cdot g$. By $(\rho,G)$-equivariance of $h$ and by the definition of a point of the associated bundle \eqref{Associated_bundle_point},
\begin{align}
&f_h^{1}(x) = [p_1,h(p_1)] = [p_2 \cdot g ,h(p_2 \cdot g)] = 
\\ &[p_2\cdot g,\rho(g^{-1})h(p_{2})] = [p_1,h(p_1)]
\end{align} we have $f_{h}(x)$ invariant by the choice of $p\in \pi^{-1}(x)$. Thus for any $p \in \pi^{-1}(x)$, $\pi_{E} \circ f_{h}(x) = x$, and $f_{h}(x) \in \Gamma(E)$. Conversely, for $f \in \Gamma(E)$, we put $h_{f}(p)= v$ such that $f \circ \pi(p) = [p,v]$. Then $h_{f}$ is $(\rho,G)$-equivariant. Therefore the coordinate-free features fields naturally generate a corresponding equivariant map. We will denote $f_{h}$ the coordinate-free feature map canonically associated to the equivariant function $h$.
\end{proof}

\begin{definition}[Canonical association function]
 We will write $\gamma$ the map associating to an equivariant function $h \in  C^{\infty}(P,V)^{(\rho,G)}$ the section of $E$, $f_{h}$ such that $f_{h} = [p,h(p)]$ :
\begin{equation}
    \gamma : C^{\infty}(P,V)^{(\rho,G)} \to \Gamma(E), h \mapsto f_{h}
\end{equation}
\end{definition}
On local neighborhood $U^{A}$ of $M$, the coordinate-free feature field is trivializable on an arbitrary local frame by the action of a gauge $\psi^{A}_{E,x}$ into numerical features $f_{h}^{A}$: 

\begin{equation}
    f_{h}^{A}(x) = \psi^{A}_{E,x}(f_{h}(x)), x \in U^{A}
\end{equation}

A different choice of trivialization on different neighbourhoods $U^{B}$  yields different numerical features related by the action of the linear representation of $V$

\begin{equation}
    f^{B}_{h}(x) = \rho(g^{BA})\psi^{A}_{E,x}(f_{h}(x))
\end{equation}

%Explain how to construct trivialization on the associated bundle

%Expand on the remark 

In all generality, the features consist of multiple independent feature fields on the same base space transforming according to different representations. The whole space is defined as the Whitney sum $\bigoplus_{i}E_{i}$. A common practice in equivariant message passing the construction of feature fields from a set of finite dimensional irreducible representations with the highest weight $\lambda$.

\subsection{Diffusion of features fields and Polyakov Action}

This section considers a coordinate-free feature field $f_{h} \in \Gamma(E)$ introduced in the previous section over a single finite-dimensional representation $V$ of $G$. We argue that an implicit regularization for the feature field is to realize an optimal embedding of $M$ in $E$, finding the optimal featurization of $M$ into the representation $V$ that respects the underlying structure imposed by the group $G$. This optimality constraint applies to its corresponding equivariant function $h$. In order to make the notion of optimality more precise, we will introduce the Polyakov action that measures the energy of an embedding and will be our optimality criterion.

We will assume that $G$ is a simply connected compact Lie group with Lie algebra $\mathfrak{g}$. For locally compact reductive Lie groups, most of the following discussion can be applied identically by considering the exposition on the maximal compact subgroup of the universal cover of the complexification of $G$ as they share the same finite-dimensional representations. One can induce a Riemannian metric $u$ on $P$ using the the Riemannian metric on $\mathcal{M}$, $\eta_{\mathcal{M}}$ and the invariant metric on $G$, $\eta_{G}$. To do so, consider the tangent space of $P$ at the point $p \in P$, $T_{p}P$. It can be decomposed into a sum of two subspaces, $V_{p}P$ called the vertical space, which is the kernel of the pushforward $\pi^{*}: T_{p}P \to T_{\pi(p)M}$ and the horizontal space $H_{p}P$ which is the complementary subspace. One can define naturally a connection on $P$ by looking at the following map $\phi_{p} : \mathfrak{g} \to V_{p}P$:
\begin{equation}
    \phi_{p}(X) = \frac{d}{dt}(p \cdot \exp(tX) )|_{t=0}, \quad X \in \mathfrak{g}
\end{equation}
In the case where $G$ is a compact simply connected Lie groups, this map is invertible, and we call $\phi_{p}^{-1}$ the inverse. A similar map can be formed if the group has a finite number of disconnected components by multiplying the map by a discrete group $\mathbf{H} \in G$ containing a representative from each connected component. From this map, one can define an inner product on $P$. Let $dp_{1} = v_{1} + h_{1}$ and $dp_{2} = v_{2} + h_{2}$ be two points on $T_{p}P$, the inner product is defined as,
\begin{equation}
(dp_{1}, dp_{2}) = \eta_{\mathcal{M}}(\pi(h_{1}), \pi(h_{2})) + \eta_{G}(\phi_{p}^{-1}(v_{1}), \phi_{p}^{-1}(v_{2}))
\end{equation}
The metric $u$ on $P$ is the metric induced by this inner product.

Moreover, we assume that $P\times V$ is equipped with a Riemannian metric $v = u \oplus \kappa \mathbf{I}_{d}$, for an arbitrary positive number $\kappa$ multiplying the identity of $V$. 
The graph of $h$ given by $\varphi_{h} : P \to P \times V$ realizes an embedding of $P$ into $P\times V$. This embedding induces a natural metric on $P \times V$ given by $\gamma_{\mu, \nu} = \frac{\partial \varphi_{h}^{i}}{\partial x_{\mu}}\frac{\partial \varphi_{h}^{j}}{\partial x_{\nu}} v_{i,j}$. We will now make precise our optimality criterion.

\begin{definition}[Optimality of the feature fields]
A feature field $f_{h} \in \Gamma(E)$ is optimal if the induced metric associated with the embedding $\varphi_{h}$ of the principal bundle to the numerical features space $V$ preserves optimally the metric on the Principal bundle. 
\end{definition}

A natural way to think about this optimal condition is that one wishes to construct features in a vector space that preserves as much as possible the geometry of the initial manifold.
To measure the energy of $\varphi_{h}$ and correlatively of $f_{h}$, we study its Polyakov Action.

\begin{definition}[Polyakov action]
The Polyakov action of the embedding $\varphi$ is defined as,
\begin{equation}
    S[\varphi_{h},u,v] = \int_{P} u^{\mu,\nu} \frac{\partial \varphi_{h}^{i}}{\partial x_{\mu}}\frac{\partial \varphi_{h}^{j}}{\partial x_{\nu}} v_{i,j} dP
\end{equation}
Minimizing the Polyakov with respect to embedding metrics $u$ and $v$ is equivalent to finding the optimal embedding  $\varphi_{h,opt}$ of $P$ in $P \times V$.
\end{definition}
The minimization of the precedent action yields the resolution of the Euler-Lagrange equation. One obtains a gradient descent flow in the form of a heat equation on $h : P \to V$ :
\begin{equation}
    \frac{\partial h_{t}}{\partial t} = \Delta^{P} h 
\end{equation}
where $ \Delta^{P}$ corresponds to the Laplace Beltrami operator on the principle bundle $P$. However, this flow equation is not guaranteed to preserve equivariance. To preserve equivariance of the initial condition $h^{(0)}$, \cite{batard} proposed introducing a twisted version of the Polyakov action resulting in a gradient descent flow. The additional term in the action is expressed as a scalar product $\langle , \rangle$ on $V$.

\begin{definition}[Casimir Operator]

Let $\mathfrak{g}$ be the Lie algebra of $G$. Let $(\mathfrak{g}_{1},...,\mathfrak{g}_{n})$ be an orthonormal basis of $\mathfrak{g}$ and $d\rho : \mathfrak{g} \to GL(V)$ the representation of $\mathfrak{g}$ induced by $\rho$. Let $(\mathfrak{g}^{1},...,\mathfrak{g}^{n})$ be the dual basis on $\mathfrak{g}$ with respect to the Killing form. The Casimir operator $Cas \in GL(V)$ is defined as :
\begin{equation}
    Cas = \sum_{i}^{n} d\rho(\mathfrak{g}^{i})d\rho(\mathfrak{g}_{i})
\end{equation}
\end{definition}
The twisted Polyakov action is given by adding a term to the Polyakov action :
\begin{equation}
        S[\varphi_{h},u,v] = \int_{P} u^{\mu,\nu} \frac{\partial \varphi_{h}^{i}}{\partial x_{\mu}}\frac{\partial \varphi_{h}^{i}}{\partial x_{\nu}} v_{i,j}  + \frac{1}{2} \langle \text{Cas} \cdot h , h\rangle dP
\end{equation}
Minimizing the equation with respect to $\varphi_{h}$, one obtains the following heat equation :
\begin{equation}
    \frac{\partial h^{t}}{\partial t} = -(\Delta^{P} \otimes \mathbb{I} + \mathbb{I} \otimes Cas) h^{t}
\end{equation}
which one can rewrite :
\begin{equation}
    \frac{\partial h^{t}}{\partial t} = \Delta^{E} h^{t}
\end{equation}
where we define $\Delta^{E} = - (\Delta^{P} \otimes \mathbb{I} + \mathbb{I} \otimes Cas)$ the generalized Laplacian on the associated bundle $E$. In the special case where $V$ is an irreducible representation, $Cas = \lambda \mathbb{I}$ and $\Delta^{E} =  - (\Delta^{P} \otimes \mathbb{I} + \lambda \mathbb{I})$.

\begin{definition}[Equivariant Feature Diffusion Process]

Let $h^{(0)} : P \to V$ be an initial equivariant feature map. Let $\Delta^{E}$ be the generalized Laplacian on the associated bundle $E$. We define an Equivariant Diffusion Process as follows:
\begin{equation}
\label{eq:GM-message-passing}
 h^{(T)} = h^{(0)} + \int_{0}^{T} \Delta^{E}h^{(t)}dt \quad y = \mathbf{R}(h^{(T)})
\end{equation}
The feature map $h^{(T)}$ will be the optimal equivariant map from $P$ to the irreducible representation $V$ of $G$ for $T$ sufficiently long. Furthermore, $\mathbf{R}$ is a learnable readout function. 
\end{definition}

From the equivariant diffusion process, we can construct a smooth coordinate independent feature field using the feature field canonically attached to any equivariant function on $P$ :
\begin{equation}
    f^{(t)}_{h}(x) = [p,h^{(t)}(p)], p \in \pi^{-1}(x)
\end{equation}

\subsection{Equivariant features propagator on reductive groups}

The diffusion process on the associated bundle, $E$, can be understood in terms of the scalar heat kernel on $P$ as detailed in \cite{Berline1992HeatKA}. First, we observe the following remark :
\begin{remark}
Let $\Delta^{E} = - (\Delta^{P}\otimes \mathbb{I} + \mathbb{I}\otimes Cas)$, then for some parameter $t$, $$e^{-t\Delta^{E}} = e^{-tCas}e^{-t\Delta^{P}}$$
\end{remark}
We have the following equality for $p_{1}\in P$
\begin{equation}
    (e^{-t\Delta^{E}} h)(p_{1}) = \int_{E} \langle p_{1} | e^{-t\Delta^{E}}|p_{2}\rangle h(p_2) dp_{2}
\end{equation}
where $\langle p_{1} | e^{-t\Delta^{E}}|p_{2}\rangle $ is the Schwartz Kernel of $ e^{-t\Delta^{E}}$ in the Dirac notation.

\begin{proposition}[Getzler - Vergne - Berline]
If $p_{1}$ is a point of the principle bundle $P$  of a locally compact group $G$ and $\Delta^{E}$ the generalized Laplacian over the associated bundle $E$. 
Set $p_{2}$ is a chosen representative in $\pi^{-1}(x)$, for $x \in \mathcal{M}$. Then for any function $h : P \to V$, and $|dx|$ a Riemannian density,
\begingroup\makeatletter\def\f@size{8.5}\check@mathfonts
\def\maketag@@@#1{\hbox{\m@th\large\normalfont#1}}%
\begin{equation}
(e^{-t\Delta^{E}} h)(p_{1}) = e^{-tCas}\int_{G}\int_{\mathcal{M}} \langle p_{1} | e^{-t\Delta^{P}}|p_{2}g\rangle \rho(g)^{-1}h(p_{2}) dg dx
\end{equation}\endgroup
In the case of $G$ is a non-compact reductive Lie group, the first integral taken over the maximal compact subgroup of the complexification of $G$ referred to as $K_{\mathbb{C}}$, following the Weyl unitary trick.
\end{proposition}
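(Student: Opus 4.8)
The plan is to establish the claimed integral formula by combining the factorization of the heat semigroup from the preceding remark with the structure of the principal bundle as a $G$-bundle over $\mathcal{M}$. First I would use the remark $e^{-t\Delta^{E}} = e^{-t\mathrm{Cas}}e^{-t\Delta^{P}}$, which follows because the Casimir acts only on the $V$-factor and the Laplace--Beltrami operator $\Delta^{P}$ acts only on the $P$-factor, so the two commute and their semigroups multiply. This pulls the scalar factor $e^{-t\mathrm{Cas}}$ out front, reducing the problem to understanding $(e^{-t\Delta^{P}}h)(p_{1})$ for a $V$-valued (in fact $(\rho,G)$-equivariant) function $h$.

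Next I would write the action of the scalar heat semigroup on $P$ in terms of its Schwartz kernel: $(e^{-t\Delta^{P}}h)(p_{1}) = \int_{P}\langle p_{1}|e^{-t\Delta^{P}}|q\rangle\, h(q)\, dq$, where $dq$ is the Riemannian volume density of the induced metric $u$ on $P$. The key geometric step is to disintegrate the measure on $P$ along the fibration $\pi : P \to \mathcal{M}$. Because the metric $u$ was built as an orthogonal sum of the pullback of $\eta_{\mathcal{M}}$ on the horizontal space and the bi-invariant metric $\eta_{G}$ on the vertical space, the volume form on $P$ factorizes (locally, and globally after choosing a representative section) as $dq = dg\, dx$, where $dg$ is the Haar measure on the fiber $\cong G$ and $|dx|$ the Riemannian density on $\mathcal{M}$. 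Fixing for each $x$ a representative $p_{2} \in \pi^{-1}(x)$, every $q \in \pi^{-1}(x)$ is uniquely $q = p_{2}g$, so the integral over $P$ becomes $\int_{\mathcal{M}}\int_{G}\langle p_{1}|e^{-t\Delta^{P}}|p_{2}g\rangle\, h(p_{2}g)\, dg\, dx$. Finally, invoke the $(\rho,G)$-equivariance of $h$, which gives $h(p_{2}g) = \rho(g)^{-1}h(p_{2})$, pull $\rho(g)^{-1}$ through and reassemble with the $e^{-t\mathrm{Cas}}$ prefactor to obtain the stated identity. For the non-compact reductive case, I would invoke the Weyl unitary trick already flagged in the text: the finite-dimensional representation theory of $G$ coincides with that of the maximal compact subgroup $K_{\mathbb{C}}$ of its complexification, so the fiber integral (which only sees $\rho$ through the equivariance relation) is replaced by integration over $K_{\mathbb{C}}$, and the Casimir computation is unchanged.

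The main obstacle I expect is justifying the measure disintegration $dq = dg\, dx$ cleanly and globally rather than just in a local trivialization, and making sure the heat kernel $\langle p_{1}|e^{-t\Delta^{P}}|p_{2}g\rangle$ is well-defined as a scalar density that transforms correctly under the choice of representative $p_{2}$; this is essentially the content of the Berline--Getzler--Vergne treatment of heat kernels on fiber bundles and would be cited from \cite{Berline1992HeatKA} rather than reproved. A secondary subtlety is convergence of the $G$- (or $K_{\mathbb{C}}$-) integral: for compact $G$ this is automatic by finiteness of Haar measure, whereas in the reductive case one must restrict to $K_{\mathbb{C}}$ precisely so that the fiber integral remains over a compact group and stays finite, which is why the statement is phrased that way.
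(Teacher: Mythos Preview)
The paper does not actually prove this proposition: it is stated as a result attributed to Berline--Getzler--Vergne and the surrounding text points to \cite{Berline1992HeatKA} for the details. So there is no ``paper's own proof'' to compare against beyond the fact that the author treats it as a citation.

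That said, your sketch is exactly the argument one finds in the cited source and is correct. The four ingredients --- the commuting factorization $e^{-t\Delta^{E}}=e^{-t\mathrm{Cas}}e^{-t\Delta^{P}}$ from the preceding remark, the Schwartz-kernel representation of $e^{-t\Delta^{P}}$, the disintegration of the Riemannian density on $P$ as $dq=dg\,dx$ coming from the orthogonal horizontal/vertical splitting of the metric $u$, and the equivariance $h(p_{2}g)=\rho(g)^{-1}h(p_{2})$ --- are precisely what is needed, and you have identified the only nontrivial point (global well-definedness of the fiberwise decomposition and of the kernel under change of representative $p_{2}$) as the place where one defers to \cite{Berline1992HeatKA}. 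Your handling of the non-compact reductive case via the Weyl unitary trick also matches the paper's stated convention. Nothing is missing.
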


\begin{definition}
Let's $e^{-t\Delta^{E}}$ be the equivariant propagator of the diffusion such that : 
\begin{equation}
\label{eq:propagator}
h^{(t')} = e^{-(t'-t)\Delta^{E}}h^{(t)}
\end{equation}
\end{definition}

\begin{definition}
For any $p_{1},p_{2} \in P$, denote 
\begin{equation}
\label{eq:heat-kernel}
    \langle p_{1} | e^{-(t'-t)\Delta^{P}} | p_{2} \rangle = k_{t'-t}(p_{1},p_{2})
\end{equation}
the heat kernel of $\Delta^{P}$ such that one can rewrite the propagator of equation \eqref{eq:propagator} as 
\begin{align}
    &h^{(t')} = e^{-(t'-t)\Delta^{E}}h^{(t)} = \\ &e^{-tCas}\int_{\mathcal{M}} \int_{G} k_{t'-t}(p_{1},p_{2}g)\rho(g)^{-1}h^{(t)}(p_{2}) dg dx
\end{align}

\end{definition}

% \begin{definition}
% Let $p_{1},p_{2} \in P$ and $N \in \mathbb{N}^{*} $, we call the equivariant propagator of order $N$, the operator acting on $h : P \to V $ such that,

% \begin{align}
%     &h^{(t')}_{N} = e^{-(t'-t)\Delta^{E}_{N}}h^{(t)} = \\ &e^{-tCas}\int_{\mathcal{M}} \int_{G} k_{t'-t}^{N}(p_{1},p_{2}g)\rho(g)^{-1}h^{(t)}(p_{2}) dg dx
% \end{align}
% \end{definition}

\begin{definition}[GM - Message passing]

Let $h^{(0)} : P \to V$ be an initial equivariant feature map. Let $\Delta^{E}$ be the generalized Laplacian on the associated bundle $E$. We define the message operation in a GM - Message passing as :
\begingroup\makeatletter\def\f@size{8.5}\check@mathfonts
\def\maketag@@@#1{\hbox{\m@th\large\normalfont#1}}%
\begin{equation}
\label{eq:GM-message-passing}
    m^{(T)}(p_{1}) = e^{-TCas}\int_{\mathcal{M}} \int_{G} k_{T}(p_{1},p_{2}g)\rho(g)^{-1}h^{(0)}(p_{2}) dg dx 
\end{equation}\endgroup
\begin{equation}
    h^{(T)}(p_{1}) = U^{(T)}(m^{(T)}(p_{1}))
\end{equation}
with the function $U^{(T)}$ a learnable equivariant function. The usual equivariant updates are linear combinations from the vector space of $C^{\infty}(P, V)^{(\rho, G)}$.
\end{definition}

\begin{remark}
  GM-Message passing networks are gauge equivariant. A weight-sharing constraint needs to be preserved for the method to be also equivariant to the diffeomorphism of the manifold. GM-Message passing networks are indeed diffeomorphism-preserving maps as the heat kernel is shared across the manifold.
\end{remark}

As this diffusion process has low correlation order (only pairwise interaction), it can be inefficient at modeling highly correlated interactions. Therefore, it is convenient to construct a higher-order GM-Message passing on a Cartesian product of manifolds. First, define $P^{n}$ the principal bundle on the base manifold $M^{n}$.

\begin{definition}[Higher order GM - Message passing]

Let $\Tilde{h}^{(0)} : P^{n} \to V^{n} $, be an initial equivariant feature map. Let $E^{n}$ be the associated vector bundle to $P^{n}$ and $\Delta^{E^{n}}$ be the generalized Laplacian on the associated bundle $E^{n}$. Let $dx^{n} = \prod_{\xi}^{n} dx_{\xi}$ be a volume element on the manifold $\mathcal{M}^{n}$. We define a Higher order GM - Message passing as :
\begingroup\makeatletter\def\f@size{8.5}\check@mathfonts
\def\maketag@@@#1{\hbox{\m@th\large\normalfont#1}}%
\begin{align}
\label{eq:HG-GM-message-passing}
    &m^{(T)}(p_{1}) = \\ 
    &e^{-TCas}\int_{\mathcal{M}^{n}} \int_{G} \Tilde{k}_{T}^{N}(p_{1},p_{2}g,..,p_{n}g)\rho(g)^{-1}\Tilde{h}^{(0)}(p_{2},...,p_{n}) dg dx^{n} \notag
\end{align}\endgroup
\end{definition}
Assume the following structure on the diffusion kernel and feature function,
\begingroup\makeatletter\def\f@size{8.5}\check@mathfonts
\def\maketag@@@#1{\hbox{\m@th\large\normalfont#1}}%
\begin{align}
\label{structure_assumption1}
\Tilde{k}_{T}(p_{1},p_{2}g,..,p_{n}g) = \prod^{n}_{\xi=1}k_{T}(p_{1},p_{\xi}g) \\
\label{structure_assumption2}
\Tilde{h}^{(0)}(p_{2}g,...,p_{n}g) = \rho(g)^{n-1} \prod_{\xi=1}^{n} h^{(0)}(p_{\xi}g)
\end{align}
\endgroup
We can rewrite the Higher-order GM-Message Passing of equation \eqref{eq:HG-GM-message-passing} as :
\begin{align}
\label{eq:HG-GM-message-passing_struct}
    &m^{(T)}(p_{1}) = \\ &e^{-TCas} \int_{G} \rho(g)^{-1} dg \prod^{n}_{\xi = 1}\int_{\mathcal{M}} k_{T}(p_{1},p_{\xi}g)h^{(0)}(p_{\xi}) dx_{\xi}\notag
\end{align}
\begin{equation}
    h^{(T)}(p_{1}) = U^{(T)}(m^{(T)}(p_{1}))
\end{equation}
The propagated equivariant map is projected back to numerical features using the canonically attached feature field and the gauge,
\begin{equation}
    f^{A,(T)}_{h}(x) = \psi^{A}_{E} \circ \gamma(h^{(T)}(p)), p \in \pi^{-1}(x)
\end{equation}

\subsection{Propagation beyond Diffusion}

Equivariant diffusion allows for an insightful interpretation regarding the minimization of the Polyakov action. It also gives theorems and techniques to understand geometrically the diffusion kernel. 
However, diffusion poses severe limitations.
The update function should be linear to conserve the original heat equation, which is not always valid in message-passing neural networks. However, this corresponds to the case of MACE \cite{Batatia2022mace}, which uses linear updates.
We allow GM - Message passing in all generality to have a non-linear update function. The nonlinearity should be of a gate form to avoid breaking equivariance \cite{WeilerManifold2021}. It is for future work to analyze the impact of nonlinearities update on the process. 

\subsection{Equivariant Message Passing over graphs }

In the previous section, we introduced a method to update the coordinate-free features field by applying an equivariant diffusion process to the canonically attached equivariant map. In this section, we will apply this formalism to discretized manifolds.
We consider now $\mathcal{G} = (\mathcal{V} = \{1,...,n\},\mathcal{E})$ a graph with $\mathcal{V}$ and $\mathcal{E}$ denoting nodes and edges respectively. Let ($\mathcal{M},\eta_{\mathcal{M}}$) be a Riemannian manifold of dimension $\mathfrak{m}$. The induced topology on the graph by the metric $\eta_{\mathcal{M}}$ is defined by the set $\mathcal{E} = \{(i,j) \in \mathcal{V}  | \eta_{\mathcal{M}}(r_{i},r_{j}) \leq r_{c}\}$ of neighboring points. By fixing one point $i$, one can construct from the set $\mathcal{E}$, neighbors of $i$ defined as $\mathcal{N}(i) = \{j \in \mathcal{V}  | \eta_{\mathcal{M}}(r_{i},r_{j}) \leq r_{c} , j \neq i \}$. We find the discrete analogy to the GM-Message passing, equivalent to finding the optimal equivariant map $h : P \to V$ with information on a discretized manifold represented by the graph $\mathcal{G}$.

\begin{definition}[State function]
We define $(P,\pi,\mathcal{M},G)$ the principal bundle of $\mathcal{M}$. Let $\sigma :\mathcal{V} \to \mathcal{M} \times  V $ be the state function of the nodes (atoms) $\mathcal{V}$  such that $\sigma_{i} = (r_{i},f_{h,i})$ with $r_{i}$ the positions and $f_{h,i}$ the canonically attached coordinate independent feature field attached to $h : P \to  V $. We call $\sigma^{A}$ the state relative to a gauge $\psi^{A}$ on $\mathcal{M}$.
\end{definition} 
\begin{figure}[h]
    \centering
    \begin{tikzcd}[column sep = 4em, row sep=3em, crossing over clearance = 1em]
    & & U \times V \\
    \mathcal{V} \arrow[urr,"\sigma^{A}"] \arrow[r,"r"] \arrow[rr,"f_{h,i}",bend right, near end ] & U  \arrow[r,"f_{h}"] \arrow[d] &  \pi^{-1}_{E}(U) \arrow[u,"\psi_{E}^{A}"'] \\
    & \pi^{-1}_{P}(U) \arrow[r,"h"]& V
    \end{tikzcd}
    \caption{The state function labels the set of nodes $\mathcal{V}$ by a tuple of positional features in $\mathcal{M}$ and a the numerical feature field in $V$ a vector space.}
\end{figure}
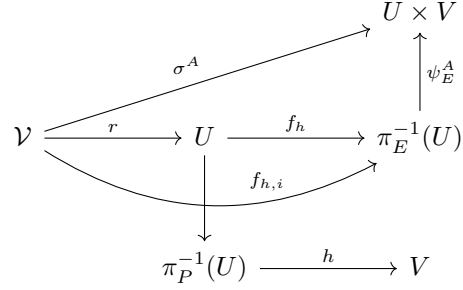
\vspace{-6pt}
\begin{definition}[Equivariant Message Passing]
Let $h^{(0)} : P \to V$ be an initial equivariant feature map. Let $\Delta^{E}$ be the generalized Laplacian on the associated bundle $E$ and $k_{T}$ the heat kernel of $\Delta^{E}$. We define the message operation of Equivariant - Message passing on $\mathcal{G}$ :
\begingroup\makeatletter\def\f@size{8.5}\check@mathfonts
\def\maketag@@@#1{\hbox{\m@th\large\normalfont#1}}%
\begin{equation}
\label{eq:Equivariant-message-passing}
    m^{(t)}(p_{i}) = e^{-tCas} \int_{G} \rho(g)^{-1}dg \prod^{n}_{\xi = 1} \sum_{j \in \mathcal{N}(i)} k_{t}(p_{i},p_{j}g)h^{(t-1)}(p_{j})
\end{equation} \endgroup
The feature map $h^{(T)}$ will be the optimal equivariant map from $P$ to the irreducible representation $V$ of $G$ for $T$ sufficiently long over the graph $\mathcal{G}$.
We identify this operation as the pooling operation of message passing,
\vspace{-5pt}
\begingroup\makeatletter\def\f@size{8.5}\check@mathfonts
\def\maketag@@@#1{\hbox{\m@th\large\normalfont#1}}%
\begin{align}
&\bigoplus_{j \in \mathcal{N}(i)} M_{t} (\sigma_{j}^{(t)},\sigma_{i}^{(t)}) = \notag
\\ &e^{-tCas} \int_{G} \rho(g)^{-1}  dg \prod^{n}_{\xi = 1} \sum_{j \in \mathcal{N}(i)} k_{t}(p_{i},p_{j}g)h^{(t-1)}(p_{j})
\end{align}
\end{definition} \endgroup
The kernel function can be learnable in all generality depending on the iteration $t$, such as a neural network. The update function $U_{t}$ can be a linear or non-linear neural network, shifting from classical diffusion to a partial differential equation evolution.

Let's assume that, $\mathcal{M} = \mathbb{R}^{3}$, $G = SE(3)$ and $V$ is the irreducible representation of invariant scalar to $SE(3)$, such that $\rho(g) = I, \forall g \in G$. If one expands the heat kernel into a spherical series, we recover the equation of the MACE architecture \cite{Batatia2022mace, Batatia2022de}.

Let $k(p,q)$ be a translationally invariant kernel. Then $\forall t \in \mathbb{R}, k(p + t, q + t) = k(p,q) = k(0,q-p) = \Tilde{k}(q-p)$. Thus we see that $\Tilde{k}$ and $k$ contain as much information.

\paragraph{Spherical expension of the kernel}First consider the heat kernel on the Sphere $S^{2}$. By Mercer's theorem, any kernel $k(p - q) \in L(S^{2})$ can be expended in the form of spherical harmonics.
% \begin{equation}
%     k(p - q) = \sum_{l=0}^{+\infty}\sum_{m=-l}^{m=l}c_{lm}Y_{m}^{l}(p-q)
% \end{equation}
By extending it to $\mathbb{R}^{3}$ any kernel on $k(p-q) \in L({R}^{3})$ can be expended into,
\begingroup\makeatletter\def\f@size{8.5}\check@mathfonts
\def\maketag@@@#1{\hbox{\m@th\large\normalfont#1}}%
\begin{equation}
    k(p - q) = \sum_{n=0}^{+\infty}\sum_{l=0}^{+\infty}\sum_{m=-l}^{m=l}R^{(n)}(\boldsymbol{p} - \boldsymbol{q})c_{lm} Y_{m}^{l}(\hat{p} - \hat{q})
\end{equation} \endgroup
where $Y_{m}^{l}$ are spherical harmonics of order $lm$
By truncating the expansion to a maximal $l$ value and injecting it into the previous message passing equation, one recovers the exact equation for the MACE messages if the time step is constant as the Casimir term will just become a constant re-scaling.

\vspace{-5pt}
\section{Discussion}
In this work, we have introduced a new geometric interpretation of message passing based on differential geometry and diffusion that led to the formulation of a general class of networks on Riemannian manifolds. Implementing these models on test manifolds is still needed, and numerical experiments are required to validate the proposed approach. Fast Fourier transform on the group could simplify the computation of the integral over the group arising in the formulation of $GM$-message. The connection with equivariant message passing could also be fruitful for the interpretability of these methods and to help create new numerical schemes based on our proposed understanding.   
A more general discussion is needed on the connection between message passing and non-linear partial differential equations on manifolds. Extending this work beyond point clouds using non-commutative geometry would represent a challenging task but might be a fruitful endeavour.

\section{Acknowledgements}
The author would like to express heartfelt gratitude towards the reviewers, denoted as $uudT$ and $8mvL$. The invaluable feedback and constructive critiques provided have been instrumental in shaping and refining this work.

% In the unusual situation where you want a paper to appear in the
% references without citing it in the main text, use \nocite
\nocite{langley00}

\bibliography{example_paper}
\bibliographystyle{icml2023}

\end{document}